	\newcommand{\R}{\mathbb{R}}
	\newcommand{\G}{\mathcal{G}}
	\newcommand{\Z}{\mathbb{Z}}
	\newcommand{\yaps}{y_{\text{APS}}}
	\DeclareMathOperator*{\argmax}{argmax}
	\newcommand{\aps}{\text{APS}}
	\newcommand{\ti}{\widetilde{i}}
	\newcommand{\tz}{\tilde{z}}
	\newcommand{\yk}{y^{(k)}}
	\newcommand{\Yk}{Y^{(k)}}
	\newtheorem{proposition}{Proposition}
	\newcommand{\w}{\omega}
	\newcommand{\DAt}{D^A_2}
	\newcommand{\UAt}{U^A_2}
	\newcommand{\tx}{\tilde{x}}
	\newcommand{\ty}{\tilde{y}}
	\newcommand{\xl}{x^{(l)} }
	\newcommand{\Fl}{F^{(l)} }
	\newcommand{\Sl}{s^{(l)} }
	\newcommand{\sz}{s^{(0)}}
	\newcommand{\tsz}{\tilde{s}^{(0)}}
	\newcommand{\xo}{{x}^{(1)}}
	\newcommand{\txo}{\tilde{x}^{(1)}}
	\newcommand\blfootnote[1]{%
		\begingroup
		\renewcommand\thefootnote{}\footnote{#1}%
		\addtocounter{footnote}{-1}%
		\endgroup
	}
	\title{Truly shift-equivariant convolutional neural networks with adaptive polyphase upsampling}
\begin{document}
		%\ninept
		%
		\maketitle
		\thispagestyle{plain}
		\pagestyle{plain}

		\begin{abstract}
			Convolutional neural networks lack shift equivariance due to the presence of downsampling layers. In image classification, adaptive polyphase downsampling (APS-D) was recently proposed to make CNNs perfectly shift invariant. However, in networks used for image reconstruction tasks, it can not by itself restore shift equivariance. We address this problem by proposing adaptive polyphase upsampling (APS-U), a non-linear extension of conventional upsampling, which allows CNNs with symmetric encoder-decoder architecture (for example U-Net) to exhibit perfect shift equivariance. With MRI and CT reconstruction experiments, we show that networks containing APS-D/U layers exhibit state of the art equivariance performance without sacrificing on image reconstruction quality. In addition, unlike prior methods like data augmentation and anti-aliasing, the gains in equivariance obtained from APS-D/U also extend to images outside the training distribution.\blfootnote{Code available at \url{https://github.com/achaman2/truly_shift_invariant_cnns}.}
		\end{abstract}
		\begin{keywords}
			Shift equivariant CNNs, shift invariance, adaptive polyphase sampling.
		\end{keywords}
		\section{Introduction}

		In image-to-image regression problems like MRI and CT reconstruction, shifts in input to a convolutional neural network (CNN) should result in similar shifts in the network's output. This property, called shift equivariance, is a highly desirable inductive bias that allows networks to reconstruct objects accurately irrespective of their position in the frame. However, recent works have shown that despite the presence of convolutions which are shift equivariant, the output of a CNN can be very unstable to shifts in its input \cite{Azulay_Weiss,engstrom2019a}. This is because of the use of downsampling layers that lack translation equivariance. For example, Fig. \ref{fig:intro_downsampling}(a) shows that shifting a signal can significantly change its downsampled output.
		
		\begin{figure}[t]
			\includegraphics[width=\linewidth]{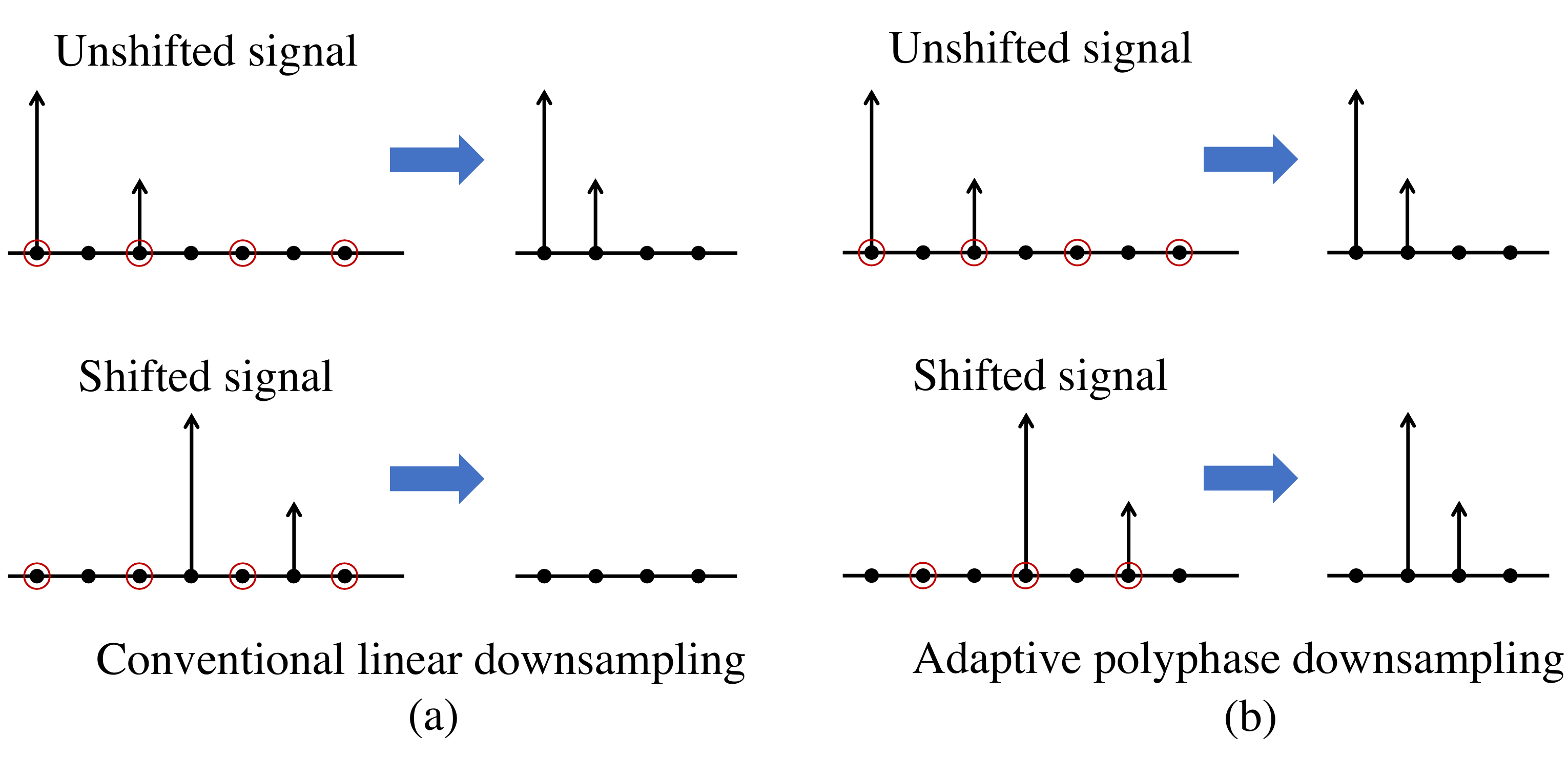}
			\centering
			\caption{(a) Conventional downsampling selects pixels along fixed locations on the sampling grid (shown with red circles). A shift in the signal changes pixels intensities on this fixed grid resulting in a very different output. (b) Adaptive polyphase downsampling selects the sampling grid adaptively such that a shift in input translates the downsampled output.}
			\label{fig:intro_downsampling}
		\end{figure}

		This problem has recently received attention in the context of shift invariance in image classification. Methods like data augmentation \cite{Simonyan15} and anti-aliasing \cite{Zhang19,DDAC,Azulay_Weiss} have been used to improve robustness to shifts. However, these methods have limitations and do not enable perfect invariance \cite{Azulay_Weiss,chaman2020truly}. In our recent work \cite{chaman2020truly}, we proposed adaptive polyphase downsampling (APS-D), a non-linear sampling scheme that allows CNN classifiers to exhibit perfect shift invariance. As shown in Fig. \ref{fig:intro_downsampling}(b), by selecting its sampling grid in a signal dependent manner, APS-D consistently downsamples the same signal structures irrespective of any shift in input. 
		
		 While APS-D enables shift invariance in classification, it can not by itself restore shift equivariance in CNNs used for image reconstruction tasks. This is because unlike shift invariance, where the goal is to obtain identical outputs for any shift in input, shift equivariance requires to propagate spatial location information from the network's input and output. Since APS-D reduces signal resolution on downsampling, shifts in the input which are not integer multiples of its stride can be mapped to the same output. Consequently, the shifts in input are lost when these low resolution feature maps pass through upsampling layers to produce the final output. To address this challenge, we propose adaptive polyphase upsampling  (APS-U), a non-linear extension of classical upsampling, that allows CNNs with symmetric encoder-decoder structures (eg. the popular U-Net architecture \cite{unet}) to exhibit perfect shift equivariance. APS-U achieves this by upsampling feature maps onto the same grid which was selected by APS-D for sampling in the encoder. 
		 
		 Using MRI and CT reconstruction experiments with U-Net, we show that our approach provides state-of-the art results in shift equivariance without sacrificing image reconstruction performance.  While methods like anti-aliasing and data augmentation do improve robustness to shifts on average, there still exist shifts that can adversely impact reconstruction performance. This is a serious problem for applications like medical imaging which require strong robustness guarantees for the reconstructed images. These methods have also been shown to get adversely impacted by non-linear activations like ReLU \cite{chaman2020truly}. On the other hand, our approach is highly robust to shifts and is not impacted by any point-wise non-linearity. In addition, our experiments reveal that networks with APS-D/U continue to remain shift equivariant on images outside the training distribution, which is not the case for anti-aliasing and data augmentation. Similar to APS-D, adaptive polyphase upsampling also does not require additional learnable parameters and can be easily integrated into existing architectures. \\

		\noindent
		\textbf{Related work.} The success of convolutional neural networks inspired research on embedding equivariances to more complex transformations like rotations, scale, reflections and the action of arbitrary groups \cite{pmlr-v48-cohenc16,pmlr-v97-cohen19d,s.2018spherical,Weiler_2018_CVPR,Sosnovik2020Scale-Equivariant,pmlr-v70-ravanbakhsh17a}. Similarly, the related problem of invariance to deformations has been theoretically studied with kernel methods \cite{10.5555/3322706.3322731,10.5555/3295222.3295369} and wavelet filter banks \cite{group_invariant_scattering,6522407}. However, the impact of downsampling on the stability of CNNs to shifts has only recently been analyzed \cite{Azulay_Weiss,landscape_of_spatial_robustness}. Manfredi and Wang \cite{manfredi2020shift} assessed the lack of shift equivariance in CNN architectures used for object detection. Anti-aliasing and data augmentation were shown to improve shift invariance in classification \cite{Azulay_Weiss, Zhang19, DDAC}. However, the improved robustness to shifts from these methods does not extend to image patterns not seen during training \cite{Azulay_Weiss}. The gains in shift invariance obtained from anti-aliasing were also shown to be limited by the action of non-linear activations in the network \cite{Azulay_Weiss, chaman2020truly}. These challenges were addressed in our earlier work \cite{chaman2020truly} where we proposed a new downsampling method called adaptive polyphase sampling that enabled perfect shift invariance in CNN classifiers.

		\section{Background}
		\label{sec:background}
		%\subsection{Equivariance to transformations.}
		\textbf{Shift equivariance.} Let $x\in \R^d$ be an input to an operator $\G$, and $T_k$ represent translation by $k \in \R^d$. Then $\G$ is said to be equivariant to shifts if $\G(T_k(x)) = T_k(\G(x))$.\\

		\noindent
		\textbf{Sampling and shift equivariance.} Let $D_2$ and $U_2$ denote downsampling and upsampling\footnote{Stride-2 max-pooling layers used in modern CNNs can be decomposed into a dense (stride 1) max-pool followed by $D_2$. Similarly, transposed convolutions and nearest-neighbour upsampling layers can be characterized in terms of $U_2$.} operations with stride $2$. For a 1-D signal $x(n)$, $y = D_2(x)$ is given by $y(n) = x(2n)$. $U_2$ upsamples signal $y$ as 
		\begin{equation}
		U_2(y) = z(n) = \begin{cases}
		y(n/2), \text{ when }n\text{ is even},\\
		0, \text{ otherwise.}
		\end{cases}
		\end{equation}
		Let $T_k = x(n-k)$ represent a $k$-pixel shift in $x$. For an odd shift $k = (2m+1)$ with $m\in \Z$, $D_2$ satisfies $D_2(T_{2m+1}(x)) = T_mD_2(T_1(x))$. It is therefore, unsurprisingly, not shift equivariant. Similarly, $\forall k \in \Z$, the action of $U_2$ on $T_k(x)$ is given by
		\begin{equation}
		\label{eq:u2_shift}
		U_2(T_k(x)) = T_{2k}(U_2(x)).
		\end{equation}
		The lack of equivariance in $D_2$ can not be corrected by anti-aliasing. Even if $x$ and $T_k(x)$ were ideal low pass filtered before downsampling, the subsequent sampled outputs $y_a$ and $\yk_a$ would have DTFTs 
		\begin{equation}
		Y_a(\w)= \frac{1}{2}X\Big(\frac{\w}{2}\Big) \text{ and } \Yk_a(\w) = \frac{1}{2}X\Big(\frac{\w}{2}\Big)e^{-\frac{jk \w}{2}}.
		\end{equation}
		
		\noindent
		One can observe that for any odd shift $k$, there does not exist an $n_0\in \Z$ such that $\Yk_a(\w)  = Y_a(\w)e^{-jn_0\w}$. This indicates that $\yk_a$ can not be obtained by translating $y_a$ with any integer shift. \\

		\noindent
		\textbf{Adaptive polyphase downsampling}. Originally proposed in \cite{chaman2020truly} to make CNN classifiers shift invariant, adaptive polyphase downsampling (APS-D) samples a 1-D signal $x$ by considering 2 possible sampling grids (illustrated in Fig. \ref{fig:intro_downsampling}(b)). The signals supported on these grids are called polyphase components and are given by $y_0(n) = x(2n)$ and $y_1(n) = x(2n+1)$. APS-D chooses the polyphase component of $x$ with the highest $l_p$ norm as its downsampled output $\DAt(x)$,  i.e. $\yaps =\DAt(x) = y_i$ where $i = \argmax_j \{\lVert y_j \rVert_p \}_{j=0}^1$. We show in \cite{chaman2020truly} that shifting the input of APS-D always results in a shift in its output. More formally, if $\yaps = \DAt(x)$ and $\yk_\aps = \DAt(T_k(x))$, then
		\begin{equation}
		\yk_\aps = \begin{cases}
		T_{\frac{k}{2}}(\yaps), \text{ when $k$ is even},\\
		T_{\frac{k+2i-1}{2}}(\yaps), \text{ when $k$ is odd},
		\end{cases}
		\label{eq:yaps_def}
		\end{equation}
		where $i\in \{0,1\}$ represents the polyphase component of $x$ with the highest norm. From \eqref{eq:yaps_def}, since $\DAt(T_k(x)) \neq T_k(\DAt(x))$ and the shift between the two signals is dependent on $i$, APS-D is not shift equivariant in the usual sense. Yet it is superior to classical downsampling which can not guarantee a shift in its output when its input is translated. We call this weaker condition as $\sigma$-equivariance.

		\section{Our approach}
		\label{sec:our_approach}
		We will focus our discussion on shift equivariance in U-Net, a highly popular architecture used for image reconstruction tasks \cite{unet_inverse_problems,Barbastathis:19,zbontar2018fastMRI}. The general conclusions can be extended to similar architectures containing symmetric encoder-decoder structure. The U-Net's encoder takes a signal $x$ as input and generates a multi-scale decomposition with $L$ scales. Feature map $\xl_e$ at scale $l$ is used by a convolutional shift equivariant block $\Fl_e$ to generate $\Sl_e = \Fl_e(\xl_e)$ which is then downsampled to produce $x^{(l+1)}_e = D_2(\Sl_e)$ for the next scale. Similarly, at scale $l$ in the decoder, a convolutional $\Fl_d$ generates $\xl_d$ from upsampled feature map $\Sl_d = U_2(x_d^{(l+1)})$ and skip connection $\Sl_e$.
		
		By replacing the downsampling layers of U-Net with APS-D, we can make its encoder $\sigma$-equivariant. Then, a shift $k$ in input  shifts feature maps $\{x_e^{(l)}\}_{l=1}^L$ by $\{k_l\}_{l=1}^L$, where the relation between $k_l$ and $k_{l-1}$ can be obtained via \eqref{eq:yaps_def}.

		\subsection{A case for adaptive upsampling}
		\label{sec:need_for_adaptive_upsampling}
		
	%	While APS-D yields $\sigma$-equivariance in the encoder, it can not by itself make the U-Net architecture perfectly shift equivariant. This is because of ....
	
	%Replacing the downsampling layers of a U-Net with APS-D can not by itself restore shift equivariance. 
	
	While APS-D can make a U-Net's encoder $\sigma$-equivariant, it can not restore perfect shift equivariance in the overall U-Net architecture. To understand this better, consider a toy example with a signal $x$ and its $1$-pixel shift $\tx = T_1(x)$ as shown in Fig. \ref{fig:aps_up_case}(a). Downsampling them with APS-D produces $\yaps = \DAt(x)$ and $\ty_\aps = \DAt(\tx)$ which correspond to polyphase components with indices $i=0$ and $1$ respectively. Notice from Fig. \ref{fig:aps_up_case}(b) that despite being sampled from different spatially located grids, $\yaps$ and $\ty_\aps$ are identical and the 1-pixel shift between the inputs is lost. Therefore, upsampling them directly results in outputs $z$ and $\tilde{z}$ which are not 1-pixel shifted versions of each other (Fig. \ref{fig:aps_up_case}(c)).
		
	One way to counter this is by upsampling the signals back to the grid from which they were originally sampled. For example, in Fig. \ref{fig:aps_up_case}(c), $\tz$ obtained after upsampling $\ty_\aps$, can be translated to the grid with index $i=1$. Similarly, $z$ can be shifted to $i=0$. One then obtains outputs $z_\aps$ and $\tilde{z}_\aps$ which from Fig. \ref{fig:aps_up_case}(d) have the same shift between them as the original inputs $x$ and $\tilde{x}$. 
		
	The above example shows that by using the index of polyphase component used by APS-D for downsampling, one can upsample signals to the `correct' sampling grids, and preserve spatial location information. We formalize this notion in the next section.

			\begin{figure}[t]
			\includegraphics[width=\linewidth]{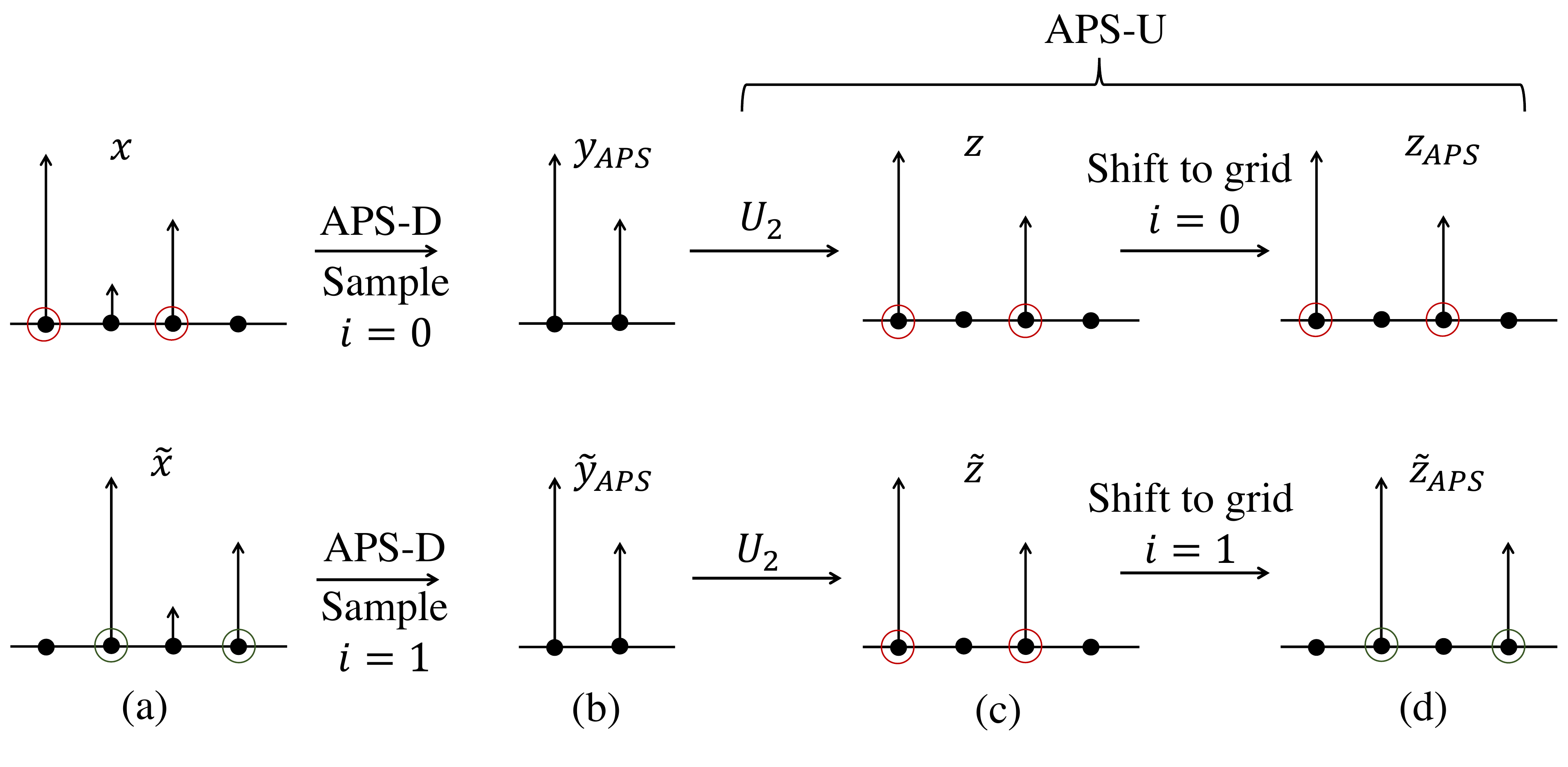}
			\centering
			\caption{(a)-(b) The relative shift between $x$ and its 1-pixel shift $\tx$ is lost after they are downsampled. (c) Outputs of conventional upsampling, therefore, are not 1-pixel shifted versions of each other. (d) By shifting the upsampled signal to the grid originally chosen by APS-D, shift equivariance is achieved. }
			\label{fig:aps_up_case}
		\end{figure}

		\vspace{-1mm}
		\subsection{Adaptive polyphase upsampling}
		\label{sec:aps_up}
		
		For simplicity, we will focus our discussion to sampling of 1-D signals with stride 2. The conclusions can be easily generalized to higher dimensions and stride. Let $\{y_i\}_{i=0}^1$ be the polyphase components of a 1-D signal $x$, given by $y_i(n) = D_2(T_{-i}(x)) = x(2n+i)$, where $i\in \{0, 1\}$. We denote $i_x$ to be the index of polyphase component with the highest norm. Then the downsampled output obtained from APS-D is $\yaps = \DAt(x) = D_2(T_{-i_x}(x))$. Let $\UAt$ denote adaptive polyphase upsampling  (APS-U) operator with stride 2. Upsampling $\yaps$ with APS-U yields $\UAt(\yaps, i_x)$ which is defined as
		\begin{equation}
		\label{eq:y_apsup_def}
		\UAt(\yaps, i_x) = T_{i_x}\Big(U_2(\yaps)\Big).
		\end{equation}
		
		The example in Fig. \ref{fig:aps_up_case} illustrates that $\UAt (\DAt(x))$ is equivalent to zeroing out all pixels of $x$ except the ones located on the grid $i_x$. Since APS-D selects sampling grid in a manner consistent to translations, one can expect a shift in $x$ to result in the same shift in $\UAt (\DAt(x))$. Indeed, we show in Proposition \ref{prop:shift_eq_UD} that $\UAt \circ \DAt$ is a shift equivariant operator.
		
		\begin{proposition}
			\label{prop:shift_eq_UD}
			Let $\DAt$ and $\UAt$ denote APS-D and APS-U operators with stride 2. Then $\UAt \circ \DAt$ is shift equivariant, i.e.
			\begin{equation}
			\UAt \circ \DAt(T_k(x)) = T_k(\UAt \circ \DAt(x)), \text{ }\forall k \in \Z.
			\end{equation}
		\end{proposition}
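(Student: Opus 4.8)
The plan is to collapse the composition $\UAt \circ \DAt$ into a single linear ``grid‑projection'' operator and then track how that operator behaves under a shift. First I would unfold the definitions: combining $\yaps = \DAt(x) = D_2(T_{-i_x}(x))$ with \eqref{eq:y_apsup_def} gives
\[
\UAt \circ \DAt(x) = T_{i_x}\bigl(U_2\bigl(D_2(T_{-i_x}(x))\bigr)\bigr).
\]
Since $U_2 \circ D_2$ keeps the even‑indexed samples of a 1‑D signal and zeros the odd ones, the conjugate $T_{i_x}(U_2 D_2)T_{-i_x}$ is exactly the operator $P_{i_x}$ that retains $x(n)$ at positions $n \equiv i_x \pmod 2$ and sets the rest to zero — which is precisely the ``keep the pixels on the grid APS‑D chose'' picture of Fig.~\ref{fig:aps_up_case}. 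So the claim reduces to showing $P_{i_{T_k(x)}}\bigl(T_k(x)\bigr) = T_k\bigl(P_{i_x}(x)\bigr)$.

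Next I would evaluate both sides. For the right‑hand side, shifting $P_{i_x}(x)$ by $k$ moves the surviving grid to positions $\equiv i_x + k \pmod 2$ and places there the value $x(n-k) = T_k(x)(n)$; hence $T_k\bigl(P_{i_x}(x)\bigr) = P_{(i_x+k)\bmod 2}\bigl(T_k(x)\bigr)$. For the left‑hand side it only remains to identify $i_{T_k(x)}$. Writing the polyphase components $\tilde y_j(n) = T_k(x)(2n+j) = x(2n+j-k)$ and splitting on the parity of $k$, one checks that for even $k$ each $\tilde y_j$ is an integer shift of $y_j$, while for odd $k$ it is an integer shift of $y_{1-j}$; in both cases $\lVert \tilde y_j \rVert_p = \lVert y_{(j+k)\bmod 2} \rVert_p$ (this is the same bookkeeping that underlies \eqref{eq:yaps_def}). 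Taking the argmax over $j$ then yields $i_{T_k(x)} = (i_x + k)\bmod 2$, so the two sides coincide and the proposition follows.

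The one step I would be most careful about is the argmax itself: the identity $i_{T_k(x)} = (i_x + k)\bmod 2$ needs the maximizing index to be well defined, so APS‑D must fix a deterministic tie‑breaking rule (as in \cite{chaman2020truly}), and that rule must be consistent under the shift. Choosing the tie‑break so that it depends only on the ordered pair of polyphase components up to shift makes the parity argument above go through verbatim. Everything else is a direct unfolding of definitions, so I expect the proof to be short once the $P_{i_x}$ reformulation is in place.
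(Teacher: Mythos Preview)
Your argument is correct and rests on the same two ingredients the paper uses: the unfolded form $\UAt\circ\DAt(x)=T_{i_x}U_2D_2T_{-i_x}(x)$ and the parity law $i_{T_k(x)}=(i_x+k)\bmod 2$ for the selected polyphase index. The difference is in how the shift is pushed through. The paper keeps the composition as $T_{i_x}U_2D_2T_{-i_x}$ and manipulates it directly with the identities $D_2 T_{2m+1}=T_m D_2 T_1$ and $U_2 T_m=T_{2m}U_2$, treating the odd and even cases separately. You instead collapse that composition into the grid projection $P_{i_x}$ and use the single commutation rule $T_k P_i = P_{(i+k)\bmod 2}T_k$, which handles both parities at once and makes the ``keep the pixels on the chosen grid'' picture explicit. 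Your route is a bit more conceptual and avoids the $D_2$/$U_2$ bookkeeping; the paper's route stays closer to the operator definitions and reuses identities already stated in the background section. Your remark on tie-breaking is also apt: the paper makes the same caveat, but only after the proof, noting that equal-norm polyphase components are a measure-zero event under a continuous model.
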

		
		\begin{proof}
			Let $x$ and its $k$-pixel shift $\tx = T_k(x)$ be inputs to $\UAt \circ \DAt$. Without loss of generality, we assume $k$ to be an odd integer. Let $i_x$ and $\ti_x$ be the polyphase component indices of $x$ and $\tx$ respectively with the highest norm. Then 
			\begin{equation}
			\DAt(x) = D_2(T_{-i_x}(x))\text{ and }\DAt(\tx) = D_2(T_{-\ti_x}(\tx)).
			\end{equation}
			From \cite{chaman2020truly}, shifting a signal with an odd shift $k$, permutes the order of its polyphase components. Therefore, $\ti_x = 1-i_x$. We can now write 
			\begin{equation}
			\label{eq:ua_da_x}
			\UAt \circ \DAt(x) = T_{i_x}U_2D_2(T_{-i_x}(x)).
			\end{equation}
			Similarly, 
			\begin{align}
			\UAt \circ \DAt(\tx) &=  T_{\ti_x}U_2D_2(T_{-\ti_x}(\tx))\\
			&=T_{\ti_x}U_2D_2(T_{k-\ti_x}(x))\\
			\label{eq:k_pop_out}
			&=T_{\ti_x} T_{k-1}U_2D_2(T_{1-\ti_x}(x))\\
%			\label{eq:ix_tix_relation}
%			&=T_{k-i_x}U_2D_2(T_{i_x}(x))\\
			%										&=T_{k-i_x} T_{2i_x}U_2 D_2 (T_{-i_x})\\
			\label{eq:shift_pop_out}
			&= T_k(T_{i_x}U_2 D_2 (T_{-i_x})),
			\end{align}
			where \eqref{eq:k_pop_out} and \eqref{eq:shift_pop_out} follow from identities \begin{align*}
			D_2(T_{2m+1}(x)) &= T_m(D_2(T_1(x))),\\ 
			U_2(T_m(x)) &= T_{2m}(U_2(x))),
			\end{align*} and $\ti_x = 1-i_x$ (for odd $k$). From \eqref{eq:ua_da_x} and \eqref{eq:shift_pop_out}, $\UAt \circ \DAt(T_k(x)) = T_k(\UAt \circ \DAt(x))$. The result can similarly be shown for even $k$ in which case $\ti_x = i_x$.  
		\end{proof}

		\subsection{Restoring shift equivariance with APS-U}
		\label{eq:restoring_equivariance}
		
		We saw in Section \ref{sec:aps_up} that the composition of APS-U and APS-D given by $\UAt \circ \DAt$ is shift equivariant. We will now show that this property allows a U-Net containing APS-D/U layers to be perfectly shift equivariant. 
		
		\begin{proposition}
			\label{prop:unet_prop}
			A U-Net architecture with downsampling and upsampling layers replaced by APS-D and APS-U layers is shift equivariant.
		\end{proposition}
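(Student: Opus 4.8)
The plan is to argue by induction over the $L$ scales of the U-Net, exploiting its modular encoder--decoder structure. First I would note (using footnote~2) that replacing the stride-$2$ downsamplers and upsamplers by APS-D and APS-U is well defined: a stride-$2$ max-pool is a dense max-pool followed by $D_2$, and a transposed / nearest-neighbour upsampler is $U_2$ followed by a convolution, so the non-equivariant cores $D_2,U_2$ get swapped for $\DAt,\UAt$ while the surrounding operations stay shift equivariant. Throughout I treat the polyphase index chosen by APS-D at each scale as an auxiliary output that is handed to the \emph{paired} APS-U at the same scale; mathematically this is just function composition with an extra argument, so $s_d^{(l)} = \UAt(x_d^{(l+1)}, i_l)$ uses the index $i_l$ that $\DAt$ selected when producing $x_e^{(l+1)}$ from $s_e^{(l)}$.

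\textbf{Encoder pass.} Let $\tx = T_k(x)$. Since each encoder block $F_e^{(l)}$ is shift equivariant and each $\DAt$ is $\sigma$-equivariant, a downward induction on $l$ shows that the feature maps $x_e^{(l)}$ and $s_e^{(l)}=F_e^{(l)}(x_e^{(l)})$ of the shifted input are $T_{k_l}$-shifted copies of those of $x$, where $k_0=k$ and $k_{l+1}$ is obtained from $k_l$ via \eqref{eq:yaps_def}: $k_{l+1}=k_l/2$ if $k_l$ is even and $k_{l+1}=(k_l+2i_l-1)/2$ if $k_l$ is odd, with $i_l$ the index APS-D selects at scale $l$ for the unshifted input. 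Moreover, as recalled in the proof of Proposition~\ref{prop:shift_eq_UD}, an odd shift permutes the polyphase components, so the index recorded by APS-D at scale $l$ for $\tx$ is $\ti_l = 1-i_l$ when $k_l$ is odd and $\ti_l=i_l$ when $k_l$ is even. At the bottleneck $x_e^{(L)}$ is then $T_{k_L}$-shifted, and since the bottleneck convolutions are shift equivariant, $x_d^{(L)}$ is $T_{k_L}$-shifted as well.

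\textbf{Decoder pass.} I would then prove by upward induction on $l$ that $x_d^{(l)}$ of the shifted input equals $T_{k_l}(x_d^{(l)})$ of the unshifted input; the base case $l=L$ is the previous sentence. For the step, $s_d^{(l)}=\UAt(x_d^{(l+1)},i_l)$ in the unshifted network and $\UAt(T_{k_{l+1}}(x_d^{(l+1)}),\ti_l)$ in the shifted one, because APS-U at scale $l$ consumes the index handed to it by APS-D at scale $l$. The crux is the identity
\[
\UAt\!\big(T_{k_{l+1}}(w),\ti_l\big) \;=\; T_{k_l}\big(\UAt(w,i_l)\big),
\]
which I would verify exactly as in the proof of Proposition~\ref{prop:shift_eq_UD}: unfolding $\UAt(\cdot,j)=T_jU_2(\cdot)$, pushing the shift through $U_2$ with \eqref{eq:u2_shift}, and substituting $\ti_l=1-i_l$ together with the explicit value of $k_{l+1}$, so that $1-i_l+2k_{l+1}=k_l+i_l$ and all shifts collapse to $T_{k_l}$; the even-$k_l$ case is immediate since then $\ti_l=i_l$ and $k_{l+1}=k_l/2$. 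Hence $s_d^{(l)}$ of the shifted input is $T_{k_l}$-shifted, and since $s_e^{(l)}$ is also $T_{k_l}$-shifted (encoder pass) and $F_d^{(l)}$ is shift equivariant, $x_d^{(l)}$ is $T_{k_l}$-shifted, closing the induction. Taking $l=0$ gives output shifted by $k_0=k$, which is the claim; the argument extends verbatim to $2$-D feature maps and general stride.

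\textbf{Main obstacle.} The only real content beyond bookkeeping is the decoder identity above: one must check that the index flip produced by APS-D under an odd shift is \emph{precisely} the correction APS-U needs to map $T_{k_{l+1}}$ back to $T_{k_l}$. This is essentially Proposition~\ref{prop:shift_eq_UD} with the intermediate $\UAt\circ\DAt$ replaced by $\UAt\circ(\text{shift-equivariant sub-network})\circ\DAt$, where the sub-network is all of scales $l+1,\dots,L$ together with the skip connections and the $F_d$ blocks; by the inductive hypothesis it merely transports the shift through, so no genuinely new step is needed. Care is also needed to ensure the skip connection $s_e^{(l)}$ and the upsampled branch $s_d^{(l)}$ carry the \emph{same} shift $k_l$ before concatenation — which is exactly why the encoder and decoder recursions must be indexed by the same sequences $\{k_l\}$ and $\{i_l\}$.
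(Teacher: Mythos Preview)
Your proposal is correct and follows essentially the same approach as the paper: both reduce to the identity $\ti_l + 2k_{l+1} = k_l + i_l$ (the content of Proposition~\ref{prop:shift_eq_UD} with an intermediate shift-equivariant block inserted between $\DAt$ and $\UAt$), and both handle general $L$ by induction. The only cosmetic difference is that the paper works out the $L=1$ case explicitly and then invokes a nested induction---treating the inner $(L{-}1)$ scales as a single shift-equivariant bottleneck module---whereas you unroll the recursion and track the sequences $\{k_l\},\{i_l\}$ through the encoder and back up the decoder; the substance is identical.
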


		\begin{proof}
			We first prove the claim for a U-Net containing a single down and upsampling layer, i.e., with $L=1$ as shown in Fig. \ref{fig:single_layer_unet_aps_d_u}. Equivariance for higher $L$ follows by induction.
			
			Let input $x$ to the network produce output $y$ and internal feature maps $\sz_e$, $\xo_e$, $\xo_d$ and $\sz_d$. Similarly, the output and feature maps for $T_k(x)$ are denoted by $\ty$ and $\tsz_e$, $\txo_e$, $\txo_d$ and $\tsz_d$. Since $F_e^{(0)}$ is convolutional, $\tsz_e = T_k(\sz_e)$. Let $i_x$ and $\ti_x$ be the indices of polyphase components of $\sz_e$ and $\tsz_e$ with the highest norm. Then, 
			\begin{align}
			\xo_e = \DAt(\sz_e) = D_2(T_{-i_x}(\sz_e))\\
			\txo_e = \DAt(\tsz_e) = D_2(T_{-\ti_x}(\tsz_e))
			\end{align}
			From the proof of Proposition \ref{prop:shift_eq_UD}, $\ti_x = 1-i_x$ when $k$ is odd and $\ti_x = i_x$ for even $k$. For odd $k$, we have 
			\begin{align}
			\sz_d = \UAt(F_e^{(1)}\xo_e )&= T_{i_x}U_2(F_e^{(1)}\xo_e)\\
			\label{eq:sz_formula}
			&=T_{i_x}U_2(F_e^{(1)}D_2(T_{-i_x}(\sz_e)))
			\end{align}
			Similarly,
			\begin{align}
			\tsz_d = \UAt(F_e^{(1)}\txo_e )= T_{\ti_x}U_2(F_e^{(1)}D_2(T_{k-\ti_x}(\sz_e))).
			\end{align}
			Using the shift equivariance of $F_e^{(1)}$ and arguments similar to the proof of Proposition \ref{prop:shift_eq_UD}, one can show that 
			\begin{align}
			\tsz_d =T_{k + i_x} U_2(F_e^{(1)} D_2 (T_{-i_x}(\sz_e))) = T_k(\sz_d)
			\end{align}
			
			Convolutional $F_d^{(0)}$ combines $\sz_d$ and skip connection $\sz_e$ to generate $y$. Since $\tsz_e = T_k(\sz_e)$ and $\tsz_d = T_k(\sz_d)$, this results in $\ty =T_k(y)$. We can similarly prove the result for even shift $k$ by using $\ti_x = i_x$.  
			The case of $L>1$ follows by induction.

		\end{proof}

		\begin{figure}[t]
			\includegraphics[width=1\linewidth]{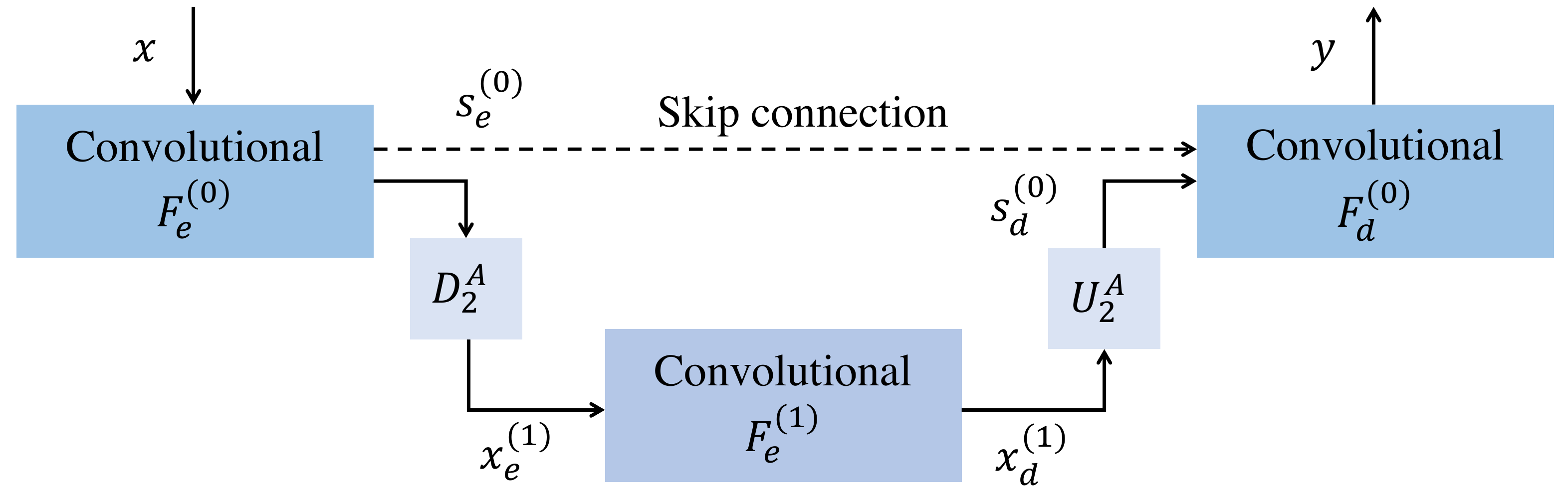}
			\centering
			\caption{U-Net containing a single APS-D and APS-U layer. }
			\label{fig:single_layer_unet_aps_d_u}
		\end{figure}

The proofs of Propositions \ref{prop:shift_eq_UD} and \ref{prop:unet_prop} rely on the fact that $\ti_x = 1-i_x$ when $k$ is odd. This condition is true as long as APS-D performs downsampling consistently. However, as pointed in \cite{chaman2020truly}, in the event when two polyphase components have exactly identical $l_p$ norms, this might fail to occur. However, in theory, if we assume polyphase components to be drawn from a continuous distribution, this is an event of probability zero. It is also very less likely in practice. As suggested in \cite{chaman2020truly}, one could avoid this rare event by selecting polyphase components using methods more robust than simple norm maximization.

		\begin{table*}[t]
			\centering
			\begin{tabular}{c|c c|c c |c  c |c  c |c  c  }
				\toprule
				\multicolumn{5}{c|}{\textbf{Equivariance metrics}}   &  \multicolumn{6}{c}{\textbf{Reconstruction metrics (unshifted)}}\\
				\hline
				\multirow{2}{*}{\textbf{Model}}  &  \multicolumn{2}{c|}{\textbf{NMSE}}    &        \multicolumn{2}{c|}{\textbf{SSIM}}      &  \multicolumn{2}{c|}{\textbf{NMSE}}    &   \multicolumn{2}{c|}{\textbf{PSNR}}         &   \multicolumn{2}{c}{\textbf{SSIM}} \\
				
				&  PD    &    PDFS  &        PD      &    PDFS   &   PD    &   PDFS     &  PD    &  PDFS  &  PD  &   PDFS \\
				\hline 
				Baseline &   0.0014   &   9.56e-4          &   0.9965     &  0.9975     &   0.016    &   0.053     &  33.83    &  29.92  &  0.8093  & 0.6301\\
				
				Baseline + DA &  1.37e-4  &	1.26e-4	&	0.9987	&	0.9990	&		0.016	&	0.053	&	33.58		&	29.86		&	0.8034		&	0.6272	\\	
							
				\hline 
				LPF-2 &  7.43e-4    &    5.34e-4     &      0.9978     &   0.9985    &     0.016  &   0.052     &  33.93    &  29.96  & 0.8122   & 0.6323 \\
							 
				LPF-3 &  4.97e-4    &   4.13e-4      &       0.9984    &   0.9988    &   \textbf{0.016}    &    \textbf{0.052}    &   \textbf{33.95}   & \textbf{29.96}   &   \textbf{0.8125} &\textbf{ 0.6325}\\
							
				LPF-5 &   5.05e-5   &    3.47e-5     &     0.9997      &   0.9998    &    0.018   &    0.055    &   33.19   &  29.76  &  0.7951  &  0.6225\\
				
				\hline 
				
				APS  &    \textbf{1.21e-7}  &    \textbf{7.37e-15}         &  \textbf{1.0  }    &   \textbf{1.0}    &   0.017    &   0.054     &  33.4    &  29.79  &  0.8013  &  0.6244 \\
				APS-2  &  \textbf{1.25e-7}    &  \textbf{5.86e-8}       &    \textbf{1.0}     &   \textbf{1.0}    &  0.017     &    0.054    &  33.51    &  29.83  &  0.8023  & 0.6255\\
				APS-3  &   \textbf{3.10e-7 }  &   \textbf{1.36e-7}          &   \textbf{1.0}     &   \textbf{1.0}    &   \textbf{0.016}    &   \textbf{0.052}     &  \textbf{33.95}    &  \textbf{29.96}  &  \textbf{0.8124 } & \textbf{0.6325} \\
				APS-5  &    \textbf{6.49e-7}  &    \textbf{2.74e-7}     &     \textbf{1.0 }    &   \textbf{1.0}   &   0.016    &     0.054   &   33.87   & 29.88   &  0.8088  & 0.6282\\
				\bottomrule

				%		&      &         &         &        &        &       &       &        &      &    &    & \\
			\end{tabular}
			\caption{Equivariance and reconstruction metrics obtained with different variants of U-Net on fastMRI validation set.}
			\label{tab:recon_and_equivariance}
			
		\end{table*}

		\begin{table*}[t]
			\centering
			\begin{tabular}{c|c |c |c |c |c | c |c |c | c }
				\toprule
				Model  &  Baseline    &		APS	&   LPF-2    &	APS-2	&  LPF-3     &		APS-3		&    LPF-5   	&	APS-5   & Baseline + DA			\\       
				\hline
				$(\Delta \text{PSNR})$  &   4.03  &  \textbf{8.6e-4}  &   2.58  &  \textbf{5.87e-4}  &  2.36  &  \textbf{3.81e-3}  &0.062  &   \textbf{3.69e-3}  &   0.037\\
				
				\bottomrule

			\end{tabular}
			
			\caption{Worst case decline in PSNR of MRI reconstructions caused by randomly shifting the images in fastMRI validation set.}
			\label{tab:worst_image_psnr_change}
			
		\end{table*}

		\begin{table}
			\centering
		\begin{tabular}{c | c    c   }
			\toprule
	%		\multicolumn{3}{c|}{\textbf{Evaluated on ImageNet}}   &  \multicolumn{2}{c}{\textbf{Evaluated on CT}}\\
	%		\hline
			Model &  NMSE  &  SSIM  \\
			\hline
			Baseline &8.67e-3 & 0.9722\\
			Baseline+DA &1.87e-3&0.9882\\
			\hline  
			LPF-2& 4.65e-3 & 0.9816 \\
			LPF-3&3.20e-3& 0.9861\\
			LPF-5& 4.40e-4& 0.9979\\
			\hline  
			APS&\textbf{3.09e-14} & \textbf{1.0}\\
			APS-2&\textbf{3.20e-14}& \textbf{1.0}\\
			APS-3&\textbf{3.39e-7} & \textbf{1.0}\\
			APS-5&\textbf{2.39e-6}& \textbf{1.0}\\
			
			\bottomrule

		\end{tabular}
		\caption{Equivariance metrics for networks trained on fastMRI training set but evaluated on ImageNet validation set.}
		\label{tab:mri_imagenet_out_of_dist}
	\end{table}

				\begin{figure*}[t]
			\includegraphics[width=0.9\linewidth]{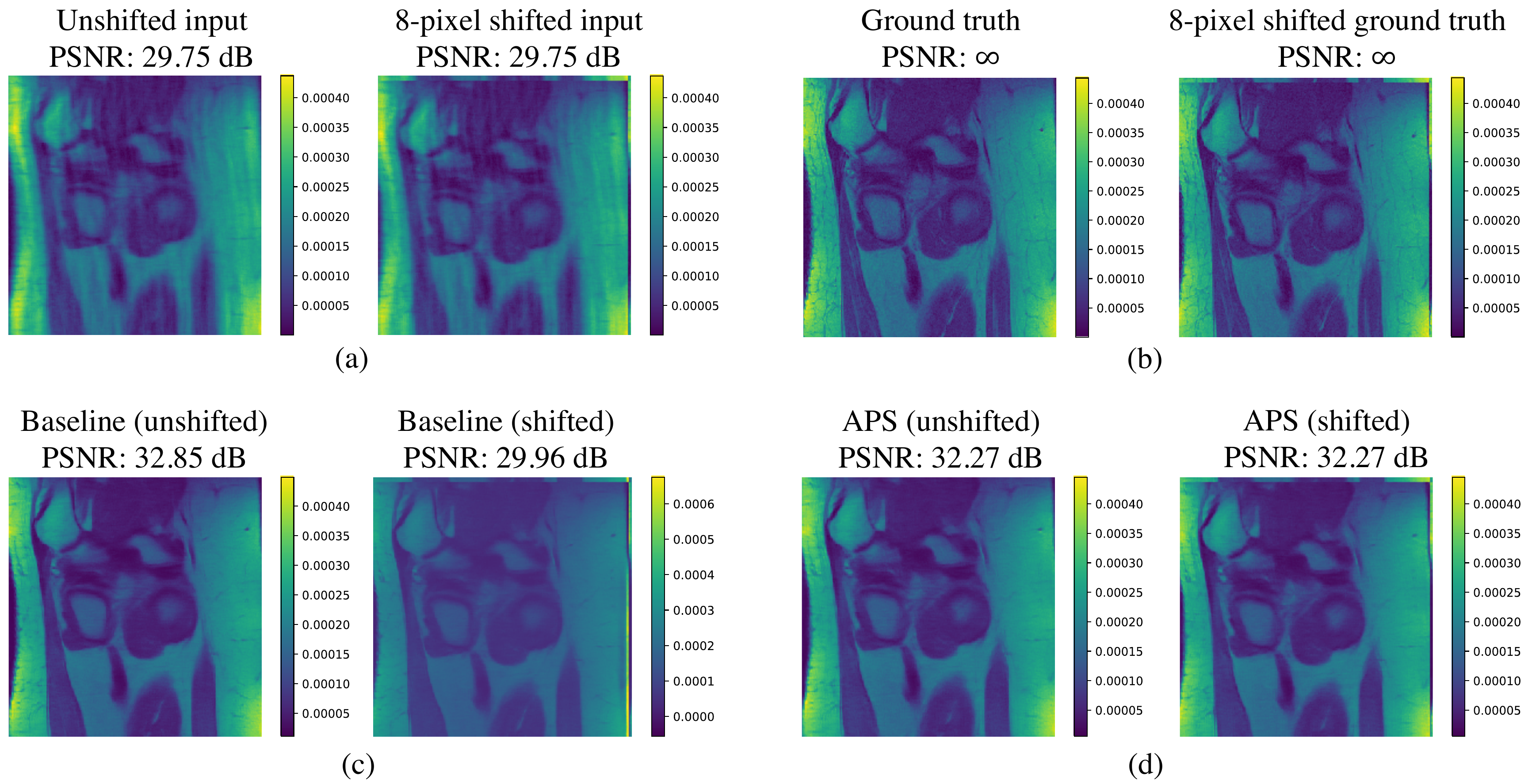}
			\centering
			\caption{Decline in PSNR of MRI reconstruction caused by shift in U-Net's input. (a) Input to the network and its (8, 5) pixel shifted version. (b) Ground truth. (c) Shifting the input to baseline U-Net results in a change in the pixel intensity distribution of its output and decline in PSNR. (d) Output of U-Net containing APS-D/U layers is highly robust to shifts. }
			\label{fig:visualize_psnr_decline}
		\end{figure*}

		\begin{table*}
			\centering
			\begin{tabular}{c | c  |  c   |     c   |c   |c   }
				\toprule
				\multicolumn{3}{c|}{\textbf{Equivariance metrics}}   &  \multicolumn{3}{c}{\textbf{Reconstruction metrics}}\\
				\hline
				\textbf{Model}  &  \textbf{NMSE}   &  \textbf{SSIM}   &   \textbf{NMSE}   &   \textbf{PSNR}  &  \textbf{SSIM} \\
				\hline
				Baseline  &     1.2e-4   &    0.9961  &   0.0056  &   34.76   &   0.8309  \\  
				Baseline+DA &  1.97e-5  &  0.9989  &   0.0056  &   34.7   & 0.8252 \\
				\hline
				
				LPF-2 &  2.95e-5  &   0.9991  &   \textbf{0.0052}   &   \textbf{35.35}    &  \textbf{0.836}  \\
				LPF-3 &  \textbf{2.83e-7 } &  \textbf{1.0}   &  0.0059   &   34.31  &   0.8191  \\
				LPF-5 & 1.67e-06 &  0.9999  &  0.0058  &  34.43  &  0.8239 \\
				\hline
				APS  &   \textbf{3.55e-8} &  \textbf{1.0}  &  0.0055  &  34.84  &   0.8214\\
				APS-2 & \textbf{1.53e-8}  &  \textbf{1.0} &\textbf{ 0.0052}  &  \textbf{35.31}  &  \textbf{0.8334}\\
				APS-3 &  \textbf{6.495e-10} &  \textbf{1.0} &  0.0054  & 34.95  & 0.8206\\
				APS-5 &  	\textbf{8.76e-9	}	&	\textbf{1.0	}	&	0.0056	&	34.69	&	0.8308		\\
				
				\hline

			\end{tabular}
			\caption{Equivariance and reconstruction metrics evaluated over LoDoPaB-CT dataset. }
			\label{tab:ct_circular_recon_and_equivariance}
		\end{table*}

		\section{Experiments}
		\label{sec:exps}
		
		We evaluate the performance of U-Net on MRI and CT reconstruction tasks with conventional (baseline) and APS-D/U sampling layers. For the two tasks, zero filled and filtered back projected images respectively are provided as input to the U-Net which then generates the final reconstructions. We also combine both the baseline and APS layers with anti-aliased filters of size $2\times2$, $3\times3$ and $5\times5$ similar to \cite{Zhang19}. Baseline models with low pass filters of size $j\times j$ are labelled as LPF-j, and those containing APS are denoted by APS-j. The baseline U-Net's encoder contains filters [64, 128, 256, 512, 1024]  with 4 strided-maxpool layers, and the decoder uses transposed convolutions for upsampling. Similar architecture is used for the LPF and APS based U-Net variants except that their stride layers are replaced by anti-aliased and APS based sampling respectively. All the networks were trained with MSE loss function and without any random shifts, unless mentioned otherwise. Networks trained with random shifts are denoted by DA. Further details on training and implementation are available in the code provided in \url{https://github.com/achaman2/truly_shift_invariant_cnns}.
		
		In addition to downsampling, CNNs can lose shift equivariance/invariance due to boundary effects as well \cite{chaman2020truly}. Therefore, to separate the impact of boundary artifacts and sampling we use circular padded convolutions and evaluate the networks for equivariance with circular shifts.

		We compared the models on two categories of metrics.
		\begin{itemize}
			\item \textbf{Equivariance metrics}: For inputs $x$ and $T_k(x)$ to a U-Net $G$, we use the SSIM and NMSE between $T_k(G(x))$ and $G(T_k(x))$ averaged over the dataset to evaluate shift equivariance of the U-Net. 
			
			We also examine the possible decline in PSNR ($\Delta PSNR$) of image reconstructions caused by shifting the U-Net's input.
			
	%		We also evaluate the worst case decline in PSNR caused by using randomly shifted input images to the U-Net. 
			
			\item \textbf{Reconstruction metrics}: To ensure that equivariance gains do not cause any sacrifice in reconstruction performance, we measure NMSE, PSNR and SSIM of the reconstructions for unshifted images.
			
		\end{itemize}

		\subsection{MRI reconstruction}
		We train and evaluate different variants of U-Net on FastMRI single coil knee reconstruction task \cite{zbontar2018fastMRI}. The networks were trained on a dataset containing $34742$ images and evaluated on the validation set with $7135$ images. The images were of size $320\times 320$.  Equivariance metrics were evaluated for each image with random shifts between $-16$ to $16$ and an average was computed over the 2 partitions provided in the dataset—`PDFS' and `PD'. Similar to \cite{zbontar2018fastMRI}, the reconstruction and equivariance metrics were computed over the provided image volumes rather than individual images. 
		
		Table \ref{tab:recon_and_equivariance} shows that models which use APS layers exhibit orders of magnitude lower equivariance errors than all other downsampling variants while still observing reconstruction performance comparable to baseline. In fact, they even surpass networks trained with random shifts (DA) on equivariance metrics by a large margin. 
		
		In addition to the equivariance metrics averaged over the entire dataset, we also assess worst case metrics, i.e. we shift each image in the validation set with 10 different random shifts and measure the worst absolute change in PSNR of the corresponding reconstructions obtained from U-Nets with different sampling modules. The worst possible decline for any image in the dataset is reported in Table \ref{tab:worst_image_psnr_change}. The results indicate that there exist shifts which can significantly impact the PSNR of reconstructions with baseline and LPF models. Networks with APS on the other hand are significantly more robust to shifts. An example illustration is provided in Fig. \ref{fig:visualize_psnr_decline}. We can observe in Fig. \ref{fig:visualize_psnr_decline}(c) that an (8, 5) pixel shift in the baseline network's input changes the pixel intensity distribution of its output and results in a decline in its PSNR with respect to the shifted ground truth. Consequently, the new reconstruction of the network is not a shifted version of its previous output. In contrast, as shown in Fig. \ref{fig:visualize_psnr_decline}(d), the two reconstructions obtained using APS U-Net are (8, 5) pixel shifted versions of each other.
		
%		The pseudo-inverse reconstruction of a knee and its (8, 5) pixel shifted version shown in Fig. \ref{fig:visualize_psnr_decline}(a) were provided as input to a baseline and APS U-Net. Fig. \ref{fig:visualize_psnr_decline}(b) contains the corresponding ground truth. We can observe that shifting the input to the baseline network changes pixel intensity distribution in the output, which is consequently not a shifted
			
		\subsubsection{Out-of-distribution equivariance}
		 In image classification, gains in shift invariance obtained from data augmentation and anti-aliasing are known to not extend well on out-of-distribution images \cite{Azulay_Weiss, chaman2020truly}. Here, we observe a similar phenomenon for shift equivariance with U-Net. We take networks trained on the fastMRI dataset and evaluated equivariance metrics on the first 1000 images from ImageNet validation set \cite{deng2009imagenet}. Table \ref{tab:mri_imagenet_out_of_dist} shows that SSIM equivariance metric on ImageNet for baseline network trained with data augmentation is $0.9882$, when it was $0.9990$ on the fastMRI dataset. We observe a similar decline for anti-aliased models as well. On the other hand, APS continues to provide SSIM of $1.0$ and orders of magnitude lower NMSE on the ImageNet dataset as well.The gains in shift equivariance provided by APS-D/U are therefore far more generalizable to out-of-dataset distributions in comparison to data augmentation and anti-aliasing.

		\subsection{CT reconstruction}
		\label{sec:ct_recon}
		We trained U-Net with different downsampling variants on the LoDoPaB-CT dataset \cite{leuschner2019lodopab} to perform CT reconstruction. The networks were trained on a dataset containing 35820 images and evaluated on the test set with 3553 images. We crop each image in the training and test set to size $352\times 352$ to ensure feature maps with even dimensions inside the U-Net. This was done to avoid boundary artifacts that arise when downsampling an odd length signal and its circular shifted version \cite{chaman2020truly}. Table \ref{tab:ct_circular_recon_and_equivariance} shows that similar to MRI reconstruction, networks with APS-D/U layers outperform the other models on shift equivariance, while performing comparably on reconstruction performance for unshifted images.

		\section{Conclusions}
		\label{sec:conclusions}
		Convolutional neural networks lose shift equivariance due to the presence of downsampling layers. While classical methods like data augmentation and anti-aliasing can improve shift equivariance on average, we show that they are not effective against all shifts. In addition, equivariance gains obtained with these methods are limited by the action of non-linear activations and do not necessarily extend well to image patterns not seen during training. In this work, we propose adaptive polyphase upsampling (APS-U) and combine it with our recently proposed adaptive polyphase downsampling (APS-D) scheme to enable perfect shift equivariance in symmetric encoder-decoder CNNs. Using experiments on MRI and CT reconstruction with U-Net architecture, we show that our approach significantly outperforms prior methods in improving translation equivariance. We also observe that the equivariance gains extend to out-of-distribution images and do not cause any sacrifice in reconstruction performance.

\section*{Acknowledgement}
This research was supported by the European Research Council Starting Grant 852821—SWING. Numerical experiments were partly performed at sciCORE (\href{http://scicore.unibas.ch/}{http://scicore.unibas.ch/}) scientific computing center at University of Basel. We also utilized computational resources supported by the National Science Foundation’s Major Research Instrumentation program, grant \#1725729, as well as the University of Illinois at Urbana-Champaign.

		% References should be produced using the bibtex program from suitable
		% BiBTeX files (here: strings, refs, manuals). The IEEEbib.bst bibliography
		% style file from IEEE produces unsorted bibliography list.
		% -------------------------------------------------------------------------
		\bibliographystyle{IEEEbib}
		\bibliography{ourbib}
		
	\end{document}